\newtheorem{theorem}{Theorem}[section]
\newtheorem{proposition}[theorem]{Proposition}
\newenvironment{proof}[1][Proof]{\begin{trivlist}
\item[\hspace \labelsep {\bfseries #1}]}{\end{trivlist}}
\newcommand{\qed}{\nobreak \ifvmode \relax \else
      \ifdim\lastskip<1.5em \hskip-\lastskip
      \hskip1.5em plus0em minus0.5em \fi \nobreak
      \vrule height0.75em width0.5em depth0.25em\fi}
\icmltitlerunning{Scalable Nonparametric Bayesian Inference on Point  Processes with Gaussian Processes}
\begin{document} 

\twocolumn[
\icmltitle{Scalable Nonparametric Bayesian Inference on Point  Processes with Gaussian Processes}

\icmlauthor{Yves-Laurent Kom Samo}{yves-laurent.komsamo@eng.ox.ac.uk}
\icmlauthor{Stephen Roberts}{sjrob@robots.ox.ac.uk}
\icmladdress{Deparment of Engineering Science and Oxford-Man Institute, University of Oxford}

\icmlkeywords{Point Processes, Cox Processes, Gaussian Processes }
\vskip 0.3in
]

\begin{abstract}
In this paper we propose an efficient, scalable non-parametric
Gaussian Process model for inference on Poisson
Point Processes. Our model does not resort to gridding the domain or to
introducing latent \textit{thinning} points. Unlike competing models
that scale as $\mathcal{O}(n^3)$ over $n$ data points, our model has a complexity
$\mathcal{O}(nk^2)$ where $k\ll n$. We propose a MCMC sampler and show
that the model obtained is faster, more
accurate and generates less correlated samples than competing approaches
on both synthetic and real-life data. Finally, we show that our model
easily handles data sizes not considered thus far by alternate
approaches. 
\end{abstract}

\section{INTRODUCTION}
Point processes are a standard model when the objects of study are the number and repartition of otherwise identical points on a domain, usually time or space. The Poisson Point Process is probably the most commonly used point process. It is fully characterised by an intensity function that is inferred from the data. Gaussian Processes have been successfully used to form a prior over the (log-) intensity function for applications such as astronomy \cite{Gregory92}, forestry \cite{heik99}, finance \cite{basu02}, and neuroscience \cite{Cunni08b}. We offer extensions to existing work as follows:  we develop an exact non-parametric Bayesian model that enables inference on Poisson processes. Our method scales linearly with the number of data points and does not resort to gridding the domain. We derive a MCMC sampler for core components of the model and show that our approach offers a faster and more accurate solution, as well as producing less correlated samples, compared to other approaches on both real-life and synthetic data.

\section{RELATED WORK}
Non-parametric inference on point processes has been extensively studied in the literature. \cite{Rathbum94} and \cite{Moller98} used a finite-dimensional piecewise constant log-Gaussian for the intensity function. Such approximations are limited in that the choice of the grid on which to represent the intensity function is arbitrary and one has to trade-off precision with computational complexity and numerical accuracy, with the complexity being cubic in the precision and exponential in the dimension of the input space.  \cite{Kottas06, Kottas07} used a Dirichlet process mixture of Beta distributions as prior for the normalised intensity function of a Poisson process. \cite{Cunni08} proposed a model using Gaussian Processes evaluated on a fixed grid for the estimation of intensity functions of renewal processes with log-concave renewal distributions. They turned hyper-parameters inference into an iterative series of convex optimization problems, where ordinarily cubic complexity operations such as Cholesky decompositions are evaluated in $\mathcal{O}(n\log{n})$ leveraging the uniformity of the grid and the log-concavity of the renewal distribution. \cite{Murray09} proposed an exact Markov Chain Monte Carlo (MCMC) inference scheme for the posterior intensity function of a Poisson Process with a Sigmoid Gaussian prior intensity, or equivalently a Cox Process \cite{Cox55} with Sigmoid Gaussian stochastic intensity. The authors simplified the likelihood of a Cox process by introducing latent \textit{thinning points}. The proposed scheme has a complexity exponential in the dimension of the input space, cubic in the number of data and thinning points, and performs particularly poorly when the data are sparse. \cite{gunter} extended this model to structured point processes. \cite{YWT11} used \textit{uniformization} to produce exact samples from a non-stationary renewal process whose hazard function is modulated by a Gaussian Process, and consequently proposed an MCMC sampler to sample from the posterior intensity of a unidimensional point process. Although the authors have illustrated that their model is faster than \cite{Murray09} on some synthetic and real-life data, their method still scales cubically in the number of thinned and data points, and is not applicable to data in dimension higher than 1, such as spatial point processes.

\section{MODEL}
\subsection{Setup}
We are tasked with making non-parametric Bayesian inference on the intensity function of a Poisson Point Process assumed to have generated a dataset $\mathcal{D} =\{ s_1, ..., s_n\}$. To simplify the discourse without loss of generality, we will assume that data points take values in $\mathbb{R}^d$.

Firstly, let us recall that a \textit{\textbf{Poisson Point Process}} (PPP) on a bounded domain $\mathcal{S} \subset \mathbb{R}^d$ with non-negative \textit{\textbf{intensity function}} $\lambda$ is a locally finite random collection of points in $\mathcal{S}$ such that the numbers of points occurring in disjoint parts $B_i$ of $\mathcal{S}$ are independent and each follows a Poisson distribution with mean $\int_{B_i} \lambda(s) ds$.

The likelihood of a PPP is given by:
\begin{align}
\label{eq:likelihood_poisson}
L(\lambda|s_1, ...,s_n) = \exp\left(-\int_{\mathcal{S}} \lambda(s) ds\right) \prod_{i=1}^n \lambda(s_i)
\end{align}

\subsection{Tractability discussion}
The approach adopted thus far in the literature to make non-parametric Bayesian inference on Point Process using Gaussian Processes (GP) \cite{Rassmussen06} consists of putting a \textit{functional prior} on the intensity function in the form of a positive function of a GP: $\lambda(s)= f(g(s))$ where $g$ is drawn from a GP and $f$ is a positive function. Examples of such $f$ include the exponential function and a scaled sigmoid function \cite{Murray09, YWT11}. This approach can be seen as a Cox Process where the stochastic intensity follows the same dynamics as the functional prior. When the Gaussian Process used has almost surely continuous paths, the random vector
\begin{align}
\label{eq:tractable_joint}
(\lambda(s_1),..., \lambda(s_n), \int_{\mathcal{S}} \lambda(s) ds)
\end{align}
provably admits a probability density function (pdf). Moreover, we note that any piece of information not contained in the implied pdf over the vector in Equation (\ref{eq:tractable_joint}) will be lost as the likelihood only depends on those variables. Hence, given a functional prior postulated on the intensity function, the only necessary piece of information to be able to make a full Bayesian treatment is the implied joint pdf over the vector in Equation (\ref{eq:tractable_joint}). 

For many useful transformations $f$ and covariance structures for the GP, the aforementioned implied pdf might not be available analytically. We note however that there is no need to put a \textit{functional prior} on the intensity function. In fact, for every \textit{finite-dimensional prior} over the vector in Equation (\ref{eq:tractable_joint}), there exists a Cox process with an a.s. $\mathcal{C}^\infty$ intensity process that coincides with the postulated prior (see appendix for the proof).

This approach is similar to that of \cite{Kottas06}. The author regarded $I =\int_{\mathcal{S}} \lambda(s) ds$ as a random variable and noted that $p(s)=\frac{\lambda(s)}{\int_{\mathcal{S}} \lambda(s) ds}$ can be regarded as a pdf whose support is the domain $\mathcal{S}$. He then made inference on $(I, p(s_1), ..., p(s_n))$, postulating as prior that $I$ and $(p(s_1), ..., p(s_n))$ are independent, $I$ has a Jeffreys prior and $(s_1, ..., s_n)$ are i.i.d. draws from a Dirichlet Process mixture of Beta with pdf $p$. 

The model we present in the following section puts an appropriate \textit{finite-dimensional} prior on $(\lambda(s_1),..., \lambda(s_n), \lambda(s^\prime_1),..., \lambda(s^\prime_k), \int_{\mathcal{S}} \lambda(s) ds)$ for some inducing points $s^\prime_j$ rather than putting a \textit{functional prior} on the intensity function directly.

\subsection{Our model}
\subsubsection{Intuition}
The intuition behind our model is that the data are not a `natural grid' at which to infer the value of the intensity function. For instance, if the data consists of 200,000 points on the interval $[0, 24]$ as in one of our experiments, it might not be necessary to infer the value of a function at 200,000 points to characterise it on $[0, 24]$. Instead, we find a small set of inducing points $\mathcal{D}^\prime = \{s^\prime_1, ..., s^\prime_k \}, k \ll n $ on our domain, through which we will define the prior over the vector in Equation (\ref{eq:tractable_joint}) augmented with $\lambda(s^\prime_1),..., \lambda(s^\prime_k)$. The set of inducing points will be chosen so that knowing $\lambda(s^\prime_1),..., \lambda(s^\prime_k)$ would result in knowing the values of the intensity function elsewhere on the domain, in particular $\lambda(s_1),..., \lambda(s_n)$, with `arbitrary certainty'. We will then analytically integrate out the dependency in $\lambda(s_1),..., \lambda(s_n)$ from the posterior, thereby reducing the complexity from cubic to linear in the number of data points without `loss of information', and reformulating our problem as that of making exact Bayesian inference on the value of the intensity function at the inducing points. We will then describe how to obtain predictive mean and variance of the intensity function elsewhere on the domain from training.

\subsubsection{Model specification}
Let us denote by $\lambda^{*}$ a positive stochastic process on $\mathcal{S}$ such that $\log{\lambda^{*}}$ is a stationary Gaussian Process  with covariance kernel $\gamma^{*}: (s_1, s_2) \to \gamma^{*}(s_1, s_2)$ and constant mean $m^{*}$. Let us further denote by $\hat{\lambda}$ a positive stochastic process on $\mathcal{S}$ such that $\log{\hat{\lambda}}$ is a  \textit{\textbf{Conditional Gaussian Process}} \textit{coinciding with $\log{\lambda^{*}}$ at $k$ inducing points $\mathcal{D}^\prime = \{s^\prime_1, ..., s^\prime_k \}, k \ll n $}. That is, $\log{\hat{\lambda}}$ is the non-stationary Gaussian Process whose mean function $m$ is defined by
\begin{align}
\label{eq:cond_mean}
m(s) = m^{*} + \Sigma^{*}_{s\mathcal{D}^\prime}\Sigma^{*-1}_{\mathcal{D}^\prime\mathcal{D}^\prime}G
\end{align}
where $G=\big(\log{\lambda^{*}(s^\prime_1)} - m^*, ..., \log{\lambda^{*}(s^\prime_k)} - m^*\big)$ and $\Sigma^{*}_{XY}$ is the covariance matrix between the vectors $X$ and $Y$ under the covariance kernel $\gamma^*$. Moreover,  $\log{\hat{\lambda}}$ is such that for every vector $S_1$ of points in $\mathcal{S}$, the auto-covariance matrix $ \Sigma_{S_1S_1}$ of the values of process at $S_1$  reads\footnote{The positive definitiveness of the induced covariance kernel $\gamma$ is a direct consequence of the positive definitiveness of $\gamma^{*}$.}
\begin{align}
\label{eq:cond_cov}
& \Sigma_{S_1S_1} = \Sigma^{*}_{S_1S_1} - \Sigma^{*}_{S_1\mathcal{D}^\prime}\Sigma^{*-1}_{\mathcal{D}^\prime\mathcal{D}^\prime}\Sigma^{*T}_{S_1\mathcal{D}^\prime}.
\end{align}

The prior distribution in our model is constructed as follows: 
\begin{enumerate}
  \item $\{\log{\lambda(s^\prime_i)}\}_{i=1}^{k}$ are samples from the stationary GP $\log{\lambda^{*}}$ at $\{s^\prime_i\}_{i=1}^{k}$ respectively, with $m^{*} =\log{\frac{\# \mathcal{D}}{\mu(\mathcal{S})}}$, where $\mu(\mathcal{S})$ is the size of the domain.
  \item $I=\int_{\mathcal{S}}\lambda(s) ds$ and  $\{\log{\lambda(s_j)}\}_{j=1}^{n}$ are conditionally independent given  $\{\log{\lambda(s^\prime_i)}\}_{i=1}^{k}$.
  \item Conditional on $\{\log{\lambda(s^\prime_i)}\}_{i=1}^{k}$, $\{\log{\lambda(s_j)}\}_{j=1}^{n}$ are independent, and for each $j \in [1..n]$ $\log{\lambda(s_j)}$ follows the same distribution as $\log{\hat{\lambda}(s_j)}$.
  \item Conditional on $\{\log{\lambda(s^\prime_i)}\}_{i=1}^{k}$, $I$ follows a Gamma distribution with shape $\alpha_I$ and scale $\beta_I$.
    \item The mean $\mu_I = \alpha_I \beta_I$ and variance $\sigma^2_I = \alpha_I \beta_I^2$ of $I$ are that of $\int_{\mathcal{S}}  \hat{\lambda}(s) ds$.
\end{enumerate}
Assertion 3. above is somewhat similar to the FITC model of \cite{FTCI}. 

This construction yields a prior pdf of the form:
\begin{align}
\label{eq:tractable_joint_prior_condgp}
& p\big(\log{\lambda(s_1)},..., \log{\lambda(s_n)}, \log{\lambda(s^\prime_1)}, ..., \log{\lambda(s^\prime_k)},  I,  \theta \big) \nonumber \\ &=   \mathcal{N}\big(\log{\lambda(s^\prime_1)}, ..., \log{\lambda(s^\prime_k)} \rvert m^{*} 1_{k}, \Sigma^{*}_{\mathcal{D}^\prime\mathcal{D}^\prime}\big)  \nonumber \\
& \times \mathcal{N}\big(\log{\lambda(s_1)},..., \log{\lambda(s_n)} \rvert  M, \text{diag}(\Sigma_{\mathcal{D}\mathcal{D}})\big)  \nonumber \\
& \times \gamma_d\big(I\rvert \alpha_I, \beta_I\big) \times p(\theta)
\end{align}

where $\mathcal{N}(. \rvert X, C)$ is the multivariate Gaussian pdf with mean $X$ and covariance matrix $C$, $M = (m(s_1), ..., m(s_n))$, $1_k$ is the vector with length k and elements 1, $\text{diag}(\Sigma_{\mathcal{D}\mathcal{D}})$ is the diagonal matrix whose diagonal is that of $\Sigma_{\mathcal{D}\mathcal{D}}$, $\gamma_d\big(x \rvert \alpha, \beta)$ is the pdf of the gamma distribution with shape $\alpha$ and scale $\beta$, and where $\theta$ denotes the hyper-parameters of the covariance kernel $\gamma^{*}$.

It follows from the fifth assertion in our prior specification that $\alpha_I = \frac{\mu_I^2}{\sigma_I^2} \text{ and } \beta_I = \frac{\sigma_I^2}{\mu_I}$. We also note that
\begin{align}
\label{eq:gamma_mean}
\mu_I &= \text{E}\big(\int_{\mathcal{S}}  \hat{\lambda}(s) ds\big)\nonumber \\
& = \int_{\mathcal{S}}  \text{E}\big(\exp(\log{\hat{\lambda}(s)}) \big)ds \nonumber \\
& = \int_{\mathcal{S}}  \exp(m(s)+\frac{1}{2} \gamma(s,s))ds
& := \int_{\mathcal{S}}  f(s)ds
\end{align}
and 
\begin{align}
\label{eq:gamma_variance}
\sigma_I^2 &= \text{E}\bigg(\big(\int_{\mathcal{S}}  \hat{\lambda}(s) ds\big)^2\bigg) -\mu_I^2\nonumber \\
&= \text{E}\bigg(\int_{\mathcal{S}}\int_{\mathcal{S}} \exp(\log{\hat{\lambda}(s_1)}+\log{\hat{\lambda}(s_2)}) ds_1 ds_2\bigg) -\mu_I^2\nonumber \\
&= \int_{\mathcal{S}}\int_{\mathcal{S}} \text{E}\bigg(\exp\big(\log{\hat{\lambda}(s_1)}+\log{\hat{\lambda}(s_2)}\big)\bigg) ds_1 ds_2 -\mu_I^2\nonumber \\
& = \int_{\mathcal{S}} \int_{\mathcal{S}}  \exp\big(m(s_1)+ m(s_2) + \gamma(s_1, s_2) + \frac{1}{2} \gamma(s_1,s_1) \nonumber \\
& + \frac{1}{2} \gamma(s_2,s_2)\big)ds_1 ds_2 -\mu_I^2 \nonumber \\
& := \int_{\mathcal{S}} \int_{\mathcal{S}} g(s_1, s_2) ds_1 ds_2 -\mu_I^2.
\end{align}

The integrals in Equations (\ref{eq:gamma_mean}) and (\ref{eq:gamma_variance}) can be easily evaluated with numerical methods such as Gauss-Legendre quadrature \cite{Hilde56}. 

In particular, when $\mathcal{S}=[a,b]$, 
\begin{align}
\mu_I \approx \frac{b-a}{2} \sum_{i=1}^{p} \omega_i f\left(\frac{b-a}{2} x_i + \frac{b+a}{2}\right) 
\end{align}
and 
\begin{align}
& \sigma_I^2 \approx \frac{(b-a)^2}{4} \sum_{i=1}^{p} \sum_{j=1}^{p} \omega_i \omega_j g\Big(\frac{b-a}{2} x_i + \frac{b+a}{2}, \frac{b-a}{2} x_j + \nonumber \\
& \frac{b+a}{2}\Bigg)  -\mu_I^2
\end{align}
where the roots $x_i$ of the Legendre polynomial of order $p$ and the weights $\omega_i$ are readily available from standard textbooks on numerical analysis such as \cite{Hilde56} and scientific programming packages (R, Matlab and Scipy). Extensions to rectangles in higher dimensions are straightforward. Moreover, the complexity of such approximations only depends on the number of inducing points and $p$ (see Equations (\ref{eq:cond_mean}) and (\ref{eq:cond_cov})), and hence \emph{scales well} with the data size.

A critical step in the derivation of our model is to analytically integrate out $\log{\lambda(s_1)},..., \log{\lambda(s_n)}$ in the posterior, to eliminate the cubic complexity in the number of data points.
To do so, we note that:
\begin{align}
\label{eq:moment_gen}
&\int_{(\mathbb{R}^d)^n}  \prod_{i=1}^{n} \lambda(s_i) \mathcal{N}\big(\log{\lambda(s_1)},..., \log{\lambda(s_n)}\rvert  M, \nonumber \\
& \text{diag}(\Sigma_{\mathcal{D}\mathcal{D}})\big)  d\log{\lambda(s_1)}...d\log{\lambda(s_n)} 
\nonumber \\
& = \text{E}\left(\exp\left(\sum_{i=1}^{n} \log{\lambda(s_i)}\right)\right) \nonumber \\
& = \exp(1_{n}^T M + \frac{1}{2} \text{Tr}( \Sigma_{\mathcal{D}\mathcal{D}}))
\end{align}
where the second equality results from the moment generating function of a multivariate Gaussian.

Thus, putting together the likelihood of Equation (\ref{eq:likelihood_poisson}) and Equation (\ref{eq:tractable_joint_prior_condgp}), and integrating out $\big(\log{\lambda(s_1)},..., \log{\lambda(s_n)}\big)$, we get:
\begin{align}
\label{eq:joint}
& p\big(\log{\lambda(s^\prime_1)}, ..., \log{\lambda(s^\prime_k)},  I, \theta \big \rvert \mathcal{D})  \nonumber \\ 
& \sim p(\theta)  \mathcal{N}\big(\log{\lambda(s^\prime_1)}, ..., \log{\lambda(s^\prime_k)} \rvert M^{*}, \Sigma^{*}_{\mathcal{D}^\prime\mathcal{D}^\prime}\big)  \nonumber \\
& \times \exp(1_{n}^T M + \frac{1}{2}  \text{Tr}( \Sigma_{\mathcal{D}\mathcal{D}}))  \exp(-I)  \gamma_d\big(I \rvert \alpha_I, \beta_I\big)
\end{align}

Finally, although our model allows for joint inference on the intensity function and its integral, we restrict our attention to making inference on the intensity function for brevity.  By integrating out  $I$ from Equation (\ref{eq:joint}), we get the new posterior:
\begin{align}
\label{eq:joint_int}
 p(\lambda, \theta \rvert \mathcal{D}) & :=  p\big(\log{\lambda(s^\prime_1)}, ..., \log{\lambda(s^\prime_k)},\theta\big \rvert \mathcal{D})  \\
&   \sim p(\theta)\exp(1_{n}^T M +  \frac{1}{2} \text{Tr}( \Sigma_{\mathcal{D}\mathcal{D}})) (1+\beta_I)^{-\alpha_I}  \nonumber \\
&  \times \mathcal{N}\big(\log{\lambda(s^\prime_1)}, ..., \log{\lambda(s^\prime_k)} \rvert M^{*}, \Sigma^{*}_{\mathcal{D}^\prime\mathcal{D}^\prime}\big) \nonumber
\end{align}
where we noted that the dependencies of Equation (\ref{eq:joint}) in $I$ is of the form $\exp(-x)\gamma_d(x \rvert \alpha, \beta)$ which can be integrated out as the moment generating function of the gamma distribution evaluated at $-1$, that is $ (1+\beta)^{-\alpha}$.

\subsubsection{Selection of inducing points}
Inferring the number $k$ and positions of the inducing points $s^\prime_i$ is critical to our model, as $k$ directly affects the complexity of our scheme and the positions of the inducing points affect the quality of our prediction. Too large a $k$ will lead to an unduly large complexity. Too small a $k$ will lead to loss of information (and subsequently excessively uncertain predictions from training), and might make assertion 2 of our prior specification inappropriate. For a given $k$, if the inducing points are not carefully chosen, the coverage of the domain will not be adapted to changes in the intensity function and as a result, the predictive variance in certain parts of the domain might considerably differ from the posterior variance we would have obtained, had we chosen inducing points in those parts of the domain.

Intuitively, a good algorithm to find inducing points should leverage prior knowledge about the smoothness, periodicity, amplitude  and length scale(s) of the intensity function to optimize for the quality of (post-training) predictions while minimising the number of inducing points. 

We use as utility function for the choice of inducing points:
\begin{align}
\label{eq:utility_inducing}
\mathcal{U}(\mathcal{D}^\prime) = \text{E}_{\theta}(\text{Tr}( \Sigma_{\mathcal{D}\mathcal{D}^{\prime}}^{*}(\theta) \Sigma_{\mathcal{D^\prime}\mathcal{D^\prime}}^{*-1}(\theta) \Sigma_{\mathcal{D}\mathcal{D}^{\prime}}^{*T}(\theta)))
\end{align}
where $\theta$ is the vector of hyper-parameters of the covariance kernel $\gamma^{*}$, and the expectation is taken with respect to the prior distribution over $\theta$. In other words, the utility of a set of inducing points is the expected total reduction of the (predictive) variances of $\log{\lambda(s_1)}, ...\log{\lambda(s_n)}$ resulting from knowing $\log{\lambda(s^\prime_1)}, ...,\log{\lambda(s^\prime_k)}$.

In practice, the expectation in Equation (\ref{eq:utility_inducing}) might not be available analytically. We can however use the Monte Carlo estimate:
\begin{align}
\label{eq:utility_inducing_MC}
\tilde{\mathcal{U}}(\mathcal{D}^\prime) &= \frac{1}{N} \sum_{i=1}^{N} \text{Tr}( \Sigma_{\mathcal{D}\mathcal{D}^{\prime}}^{*}(\tilde{\theta}_i) \Sigma_{\mathcal{D^\prime}\mathcal{D^\prime}}^{*-1}(\tilde{\theta}_i) \Sigma_{\mathcal{D}\mathcal{D}^{\prime}}^{*T}(\tilde{\theta}_i)).
\end{align}

The algorithm proceeds as follows. We sample $(\tilde{\theta}_i)_{i=1}^{N}$ from the prior. Initially we set $k=0$, $\mathcal{D}^{\prime} = \varnothing$ and $u_0=0$. We increment $k$ by one, and consider adding an inducing point. We then find the point $s_{k}^\prime$ that maximises $\tilde{\mathcal{U}}(\mathcal{D}^{\prime} \cup \{s\})$
\begin{align}
s_{k}^\prime := \underset{s \in \mathcal{S}}{\text{argmax }} \tilde{\mathcal{U}}(\mathcal{D}^{\prime} \cup \{s\})
\end{align}
using Bayesian optimisation \cite{Mockus13}. We compute the utility of having $k$ inducing points as \[u_{k} = \tilde{\mathcal{U}}(\mathcal{D}^{\prime} \cup \{s_k^\prime\}),\] we update $\mathcal{D}^\prime = \mathcal{D}^\prime \cup \{s_{k}\} $ and stop when \[ \frac{u_{k}-u_{k-1}}{u_{k}} < \alpha, \]
where $0< \alpha \ll1$ is a convergence threshold.

\begin{algorithm}[ht]
   \caption{Selection of inducing points}
   \label{al:inducing}
\begin{algorithmic}
   \STATE {\bfseries Inputs:}$0< \alpha \ll1$, $N$, $p_{\theta}$
   \STATE {\bfseries Output:} $u_{f}$, $\mathcal{D}^{\prime}$
   \STATE $k=0$, $u_0=0$, $\mathcal{D}^{\prime} = \varnothing$, $e=1$;
   \STATE  Sample $(\tilde{\theta}_i)_{i=1}^{N}$ from $p(\theta)$;
   \WHILE{$e> \alpha$}
    \STATE $k = k +1;$
    \STATE $s^\prime_k =  \underset{s \in \mathcal{S}}{\text{argmax }} \tilde{\mathcal{U}}(\mathcal{D}^{\prime} \cup \{s\});$
    \STATE $u_k=\tilde{\mathcal{U}}(\mathcal{D}^{\prime} \cup \{s^\prime_k\});$
    \STATE $\mathcal{D}^\prime = \mathcal{D}^\prime \cup \{s^\prime_k\};$
     \STATE $e = \frac{u_{k}-u_{k-1}}{u_{k}};$
   \ENDWHILE
\end{algorithmic}
\end{algorithm}
\textbf{Proposition}

\textit{(a) For any $\mathcal{D}$, $\alpha$, $N$ and $p_{\theta}$ Algorithm~\ref{al:inducing} stops in finite time and the sequence $(u_k)_{k \in \mathbb{N}}$ converges at least linearly with rate $1-\frac{1}{\#\mathcal{D}}$}.

\textit{(b) Moreover, the maximum utility $u_f(\alpha)$ returned by Algorithm~\ref{al:inducing} converges to the average total unconditional variance $w_{\infty} := \frac{1}{N} \sum_{i=1}^{N}  \text{Tr}(\Sigma_{\mathcal{D}\mathcal{D}}^{*}(\tilde{\theta}_i))$ as $\alpha$ goes to $0$}.

The idea behind the proof of this proposition is that the sequence of maximum utilities $u_k$ is positive, increasing\footnote{Intuitively, conditioning on a new point increases the reduction of variance from the unconditional variance.}, and upper-bounded by the total unconditional variance $w_{\infty}$\footnote{The variance cannot be reduced by more than the total unconditional variance.}. Hence, the sequence $u_k$ converges to a strictly positive limit, which implies that the stopping condition of the while loop will be met in finite time regardless of $\mathcal{D}$, $\alpha$, $N$ and $p_{\theta}$. Finally, we construct a sequence $w_k$ upper-bounded by the sequence $u_k$ and that converges linearly to the average total unconditional variance $w_{\infty}$ with rate $1-\frac{1}{\#\mathcal{D}}$. As the sequence $u_k$ converges and is itself upper-bounded by $w_{\infty}$, its limit is $w_{\infty}$ as well, and it converges at least as fast as $w_k$. \textit{(See appendix for the full proof)}

Our algorithm is particularly suitable to Poisson Point Processes as it prioritises sampling inducing points in parts of the domain where the data are denser. This corresponds to regions where the intensity function will be higher, thus where the local random counts of the underlying PPP will vary more\footnote{The variance of the Poisson distribution is its mean.} and subsequently where the posterior variance of the intensity is expected to be higher. Moreover, it leverages prior smoothness assumptions on the intensity function to limit the number of inducing points and to appropriately and sequentially improve coverage of the domain. 

Algorithm~\ref{al:inducing} is illustrated on a variety of real life and synthetic data sets in section \ref{sec:5}.

\section{INFERENCE}
We use a Squared Exponential kernel for $\gamma^{*}$ and \textit{Scaled Sigmoid Gaussian} priors for the kernel hyper-parameters; that is $\theta_i = \frac{\theta_{i\text{max}}}{1+\exp(-x_i)}$ where $x_i$ are i.i.d standard Normal. The problem-specific scales, $\theta_{i\text{max}}$, restrict the supports of those distributions using prior knowledge to avoid unlikely extreme values and to improve conditioning.

We use a Block Gibbs Sampler \cite{Gibbs84} to sample from the posterior. We sample the hyper-parameters using the Metropolis-Hastings \cite{Hastings70} algorithm taking as proposal distribution the prior of the variable of interest. We sample the log-intensities at the inducing points using Elliptical Slice Sampling \cite{Murray09b} with the pdf in Equation (\ref{eq:joint_int}).

\textbf{Prediction from training}

To predict the posterior mean at the data points we note from the law of total expectation that 
\begin{align}
\label{eq:tot_exp}
& \forall s_i \in \mathcal{D}, ~ \text{E}(\log{\lambda(s_i)}|\mathcal{D}) \nonumber \\ & = \text{E}\left(\text{E}\left(\log{\lambda(s_i)}|\{\log{\lambda^{*}(s^\prime_j)}\}_{j=1}^k, \mathcal{D}\right)|\mathcal{D}\right).
\end{align}
Also, we note from Equations (\ref{eq:likelihood_poisson}) and  (\ref{eq:tractable_joint_prior_condgp}) that the dependency of the posterior of $\log{\lambda(s_i)}$ conditional on $\{\log{\lambda^{*}(s^\prime_j)}\}_{j=1}^k$ is of the form \[\exp(\log{\lambda(s_i)}) \times \mathcal{N}(\log{\lambda(s_i)}|m(s_i), \gamma(s_i, s_i)),\] where we recall that $m(s_i)$ is the $i$-th element of the vector $M$ and $\gamma(s_i, s_i)$ is the $i$-th diagonal element of the matrix $\Sigma_{\mathcal{D}\mathcal{D}}$. Hence, the posterior distribution of $\log{\lambda(s_i)}$ conditional on  $\{\log{\lambda^{*}(s^\prime_j)}\}_{j=1}^k$ is Gaussian with mean 
\begin{align}
\label{eq:predictive_mean}
\text{E}\left(\log{\lambda(s_i)}|\{\log{\lambda^{*}(s^\prime_j)}\}_{j=1}^k, \mathcal{D} \right)=M[i]+\Sigma_{\mathcal{D}\mathcal{D}}[i,i]
\end{align}
and variance 
\begin{align}
\label{eq:predictive_variance}
\text{Var}\left(\log{\lambda(s_i)}|\{\log{\lambda^{*}(s^\prime_j)}\}_{j=1}^k, \mathcal{D} \right)= \Sigma_{\mathcal{D}\mathcal{D}}[i,i].
\end{align}
Finally, it follows from Equation (\ref{eq:tot_exp}) that $\text{E}(\log{\lambda(s_i)}|\mathcal{D})$ is obtained by averaging out $M[i]+\Sigma_{\mathcal{D}\mathcal{D}}[i,i]$ over MCMC samples after burn-in. 

Similarly, the law of total variance implies that
\begin{align}
& \text{Var}(\log{\lambda(s_i)}|\mathcal{D})  \nonumber \\
& = \text{E}\left(\text{Var}\left(\log{\lambda(s_i)}|\{\log{\lambda^{*}(s^\prime_j)}\}_{j=1}^k,\mathcal{D}\right)|\mathcal{D}\right)  \nonumber \\
& +\text{Var}\left( \text{E}\left(\log{\lambda(s_i)}|\{\log{\lambda^{*}(s^\prime_j)}\}_{j=1}^k, \mathcal{D}\right)|\mathcal{D}\right).
\end{align}

Hence, it follows from Equations (\ref{eq:predictive_mean}) and (\ref{eq:predictive_variance}) that the posterior variance at a data point $s_i$ is obtained by summing up the sample mean  of $\Sigma_{\mathcal{D}\mathcal{D}}[i,i]$ with the sample variance of $M[i]+\Sigma_{\mathcal{D}\mathcal{D}}[i,i]$, where sample mean and sample variance are taken over MCMC samples after burn-in.

\section{EXPERIMENTS}
\label{sec:5}
We selected four data sets to illustrate the performance of our model. We restricted ourselves to one synthetic data set for brevity. We chose the most challenging of the synthetic intensity functions of \cite{Murray09} and \cite{YWT11}, $\lambda(t) =2\exp(-\frac{t}{15}) + \exp(-(\frac{t-25}{10})^2)$, to thoroughly compare our model with competing methods. We also ran our model on a standard 1 dimensional real-life data set (the coal mine disasters dataset used in \cite{jarrett79}; 191 points) and a standard real-life 2 dimensional data (spatial location of bramble canes \cite{Diggle83}; 823 points). Finally we ran our model on a real-life data set large enough to cause problems to competing models. This data set consists of the UTC timestamps (expressed in hours in the day) of Twitter updates in English published in the \cite{Twitter14} on September 1st 2014 (188544 points).

\subsection{Inducing points selection}

Figure \ref{fig:convergence_plot} illustrates convergence of the selection of inducing points on the 4 data sets. We ran the algorithm 10 times with $N=20$, and plotted the average normalised utility $\frac{u_k}{u_{\infty}} \pm 1 \text{ std}$ as a function of the number of inducing points. Table \ref{table:hyper} contains the maximum hyper-parameters that were used for each data set. Table \ref{table:num_ind} contains the number of inducing points required to achieve some critical normalised utility values for each of the 4 data sets. We note that just 8 inducing points were required to achieve a 95\% utility for the Twitter data set (188544 points). In regards to the positions of sampled inducing points, we note from Figures \ref{fig:expo_gauss_plot} and \ref{fig:real_data_plot} that when the intensity function was bimodal, the first inducing point was sampled around the argument of the highest mode, and the second inducing point was sampled around the argument of the second highest mode. More generally, the algorithm sampled inducing points where the latent intensity function varies the most, as expected.

\begin{table}[t]
\caption{Maximum output (resp. input) scale $h_{\text{max}}$ (resp. $l_{\text{max}}$) used for each data set to select inducing points.}
\label{table:hyper}
\vskip 0.1in
\begin{center}
\begin{small}
\begin{sc}
    \begin{tabular}{  l | l | l | l | l}
    \hline
     & synthetic & coal mine & bramble & twitter \\ \hline
    $h_{\text{max}}$ & 10.0 & 10.0& 10.0 & 10.0  \\ \hline
    $l_{\text{max}}$ & 25.0 & 50.0 & 0.25 & 5.0  \\ \hline
    \end{tabular}
\end{sc}
\end{small}
\end{center}
\vskip -0.1in
\end{table}

\begin{figure}[!ht]
\centering
\includegraphics[width=0.5\textwidth]{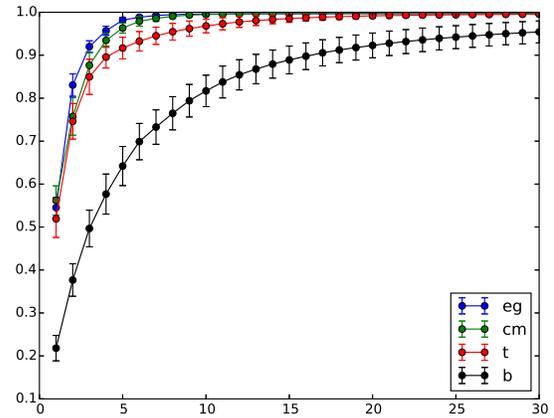}
\caption{Average normalised utility $\frac{u_k}{u_{\infty}}$ of choosing k inducing points using Algorithm~\ref{al:inducing} $\pm$ 1 standard deviation as a function of k on the synthetic data set (eg), the coal mine data set (cm), the Twitter data set (t) and the bramble canes data set (b). The average was taken over 10 runs.}
\label{fig:convergence_plot}
\end{figure}

\begin{table}
\caption{Number of inducing points produced by Algorithm \ref{al:inducing} required to achieve some critical normalised utility values on the 4 data sets.}
\label{table:num_ind}
\vskip 0.1in
\begin{center}
\begin{small}
\begin{sc}
\begin{tabular}{ l | l | l | l | l |}
    \cline{2-5}
  & \multicolumn{4}{c|} {$k$}  \\ \hline
  $\frac{u_k}{u_{\infty}}$ & synthetic & coal mine & bramble & twitter \\ \cline{1-5} \hline
    0.75& 2 & 2& 8 & 3  \\ \cline{1-5} \hline
    0.90 & 3 & 4 & 17 & 5 \\ \cline{1-5} \hline
    0.95 & 4 & 5 & 28 & 8 \\ \cline{1-5} \hline
\end{tabular}
\end{sc}
\end{small}
\end{center}
\vskip -0.1in
\end{table}

\subsection{Intensity function}
In each experiment we generated 5000 samples after burn-in (1000 samples). For each data set we used the set of inducing points that yielded a 95\% normalized utility. The exact numbers are detailed in Table \ref{table:num_ind}.

We ran a Monte Carlo simulation for the stochastic processes considered herein and found that the Legendre polynomial order $p=10$ was sufficient to yield a Quadrature estimate for the standard deviation of the integral less than 1\% away from the Monte Carlo estimate (using the trapezoidal rule), and a Quadrature estimate for the mean of the integral less than a standard error away from the Monte Carlo average. We took a more conservative stand and used $p=20$.

\textbf{Inference on synthetic data}

We generated a draw from a Poisson point process with the intensity function $\lambda(t) =2\exp(-\frac{t}{15}) + \exp(-(\frac{t-25}{10})^2)$ of \cite{Murray09} and \cite{YWT11}. The draw consisted of 41 points (blue sticks in Figure \ref{fig:expo_gauss_plot}). We compared our model to \cite{Murray09} (SGCP) and \cite{YWT11} (RMP). We ran the RMP model with the renewal parameter $\gamma$ set to 1 (RMP 1), which corresponds to an exponential renewal distribution or equivalently an inhomogeneous Poisson process. We also ran the RMP model with a uniform prior on $[1,5]$ over the renewal parameter $\gamma$ (RMP full). Figure \ref{fig:expo_gauss_plot} illustrates the posterior mean intensity function under each model. Finally we ran the Dirichlet Process Mixture of Beta model of \cite{Kottas06} (DPMB). As detailed in Table \ref{table:synth_comp}, our model outperformed that of \cite{Murray09}, \cite{YWT11} and \cite{Kottas06} in terms of accuracy and speed.

\begin{figure}[!ht]
\centering
\includegraphics[width=0.5\textwidth]{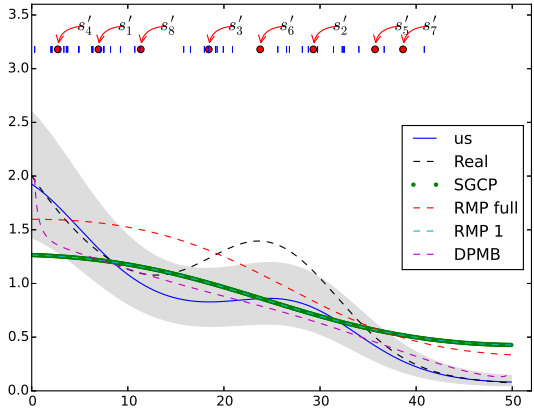}
\caption{Inference on a draw (blue sticks) from a Poisson point process with intensity $\lambda(t) =2\exp(-\frac{t}{15}) + \exp(-(\frac{t-25}{10})^2)$ (black line). The red dots are the inducing points generated by our algorithm, labelled in the order they were selected.  The solid blue line and the grey shaded area are the posterior mean $\pm$ 1 posterior standard deviation under our model. SGCP is the posterior mean under \cite{Murray09}. RMP full and RMP 1 are the posterior mean intensities under  \cite{YWT11} with $\gamma$ inferred and set to $1$ respectively. DPMB is the Dirichlet Process mixture of Beta \cite{Kottas06}}
\label{fig:expo_gauss_plot}
\end{figure}

\begin{table}
\caption{Some statistics on the MCMC runs of Figure \ref{fig:expo_gauss_plot}. RMSE and MAE denote the Root Mean Square Error and the Mean Absolute Error, expressed as a proportion of the average of the true intensity function over the domain. LP denotes the log mean predictive probability on 10 held out PPP draws from the true intensity $\pm$ 1 std. t(s) is the average time in seconds it took to generate 1000 samples $\pm$ 1 std and ESS denotes the average effective sample size \cite{Gelman13} per 1000 samples.}
\label{table:synth_comp}
\vskip 0.1in
   \begin{center}
   \begin{small}
   \begin{sc}
\resizebox{0.5\textwidth}{!}{
    \begin{tabular}{  l | l | l | l | l | l}
    \hline
     & MAE & RMSE & LP & t (s) & ESS\\ \hline
    SGCP & 0.31 & 0.37 & -45.07 $\pm$ 1.64  & 257.72 $\pm$ 16.29 & 6\\ \hline
    RMP 1 & 0.32 & 0.38 & -45.24 $\pm$ 1.41 & 110.19 $\pm$ 7.37 & 23  \\ \hline
    RMP full & 0.25 & 0.31 & -43.51 $\pm$ 2.15 & 139.64 $\pm$ 5.24 & 6 \\ \hline
    DPMB & 0.23 & 0.32 & -42.95 $\pm$ 3.58 & 23.27 $\pm$ 0.94  & \textbf{47} \\ \hline
    Us & \textbf{0.19} & \textbf{0.27} & \textbf{-42.84 $\pm$ 3.07} & \textbf{4.35 $\pm$  0.12} & 38\\ \hline
    \end{tabular}}
\end{sc}
\end{small}
\end{center}
\vskip -0.1in
\end{table}

\textbf{Inference on real-life data}

Figure \ref{fig:real_data_plot} shows the posterior mean intensity functions of the coal mine data set, the Twitter data set and the bramble canes data set under our model. 
\paragraph{Scalability:} We note that it took only 240s on average to generate 1000 samples on the Twitter data set (188544 points). As a comparison, this is the amount of time that would be required to generate as many samples on a data set that has 50 points (resp. 100 points) under the models of \cite{Murray09} (resp. \cite{YWT11}). More importantly, it was not possible to run either of those two competing models on the twitter data set. Doing so would require computing $17\times10^{10}$ covariance coefficients to evaluate a single auto-covariance matrix of the log-intensity at the data points, which a typical personal computer cannot handle.

\begin{figure}[!ht]
(a) \includegraphics[width=0.45\textwidth]{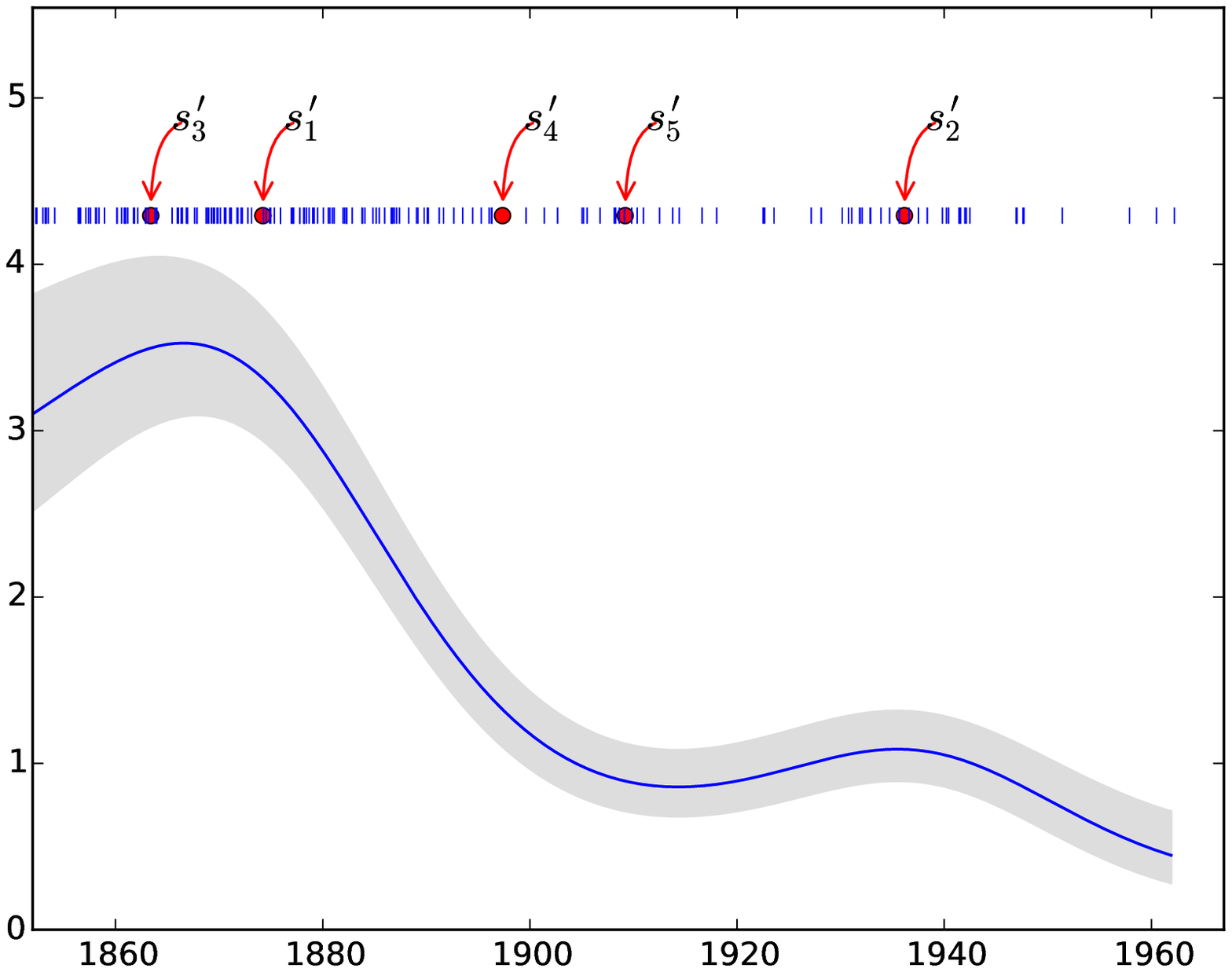}
(b) \includegraphics[width=0.45\textwidth]{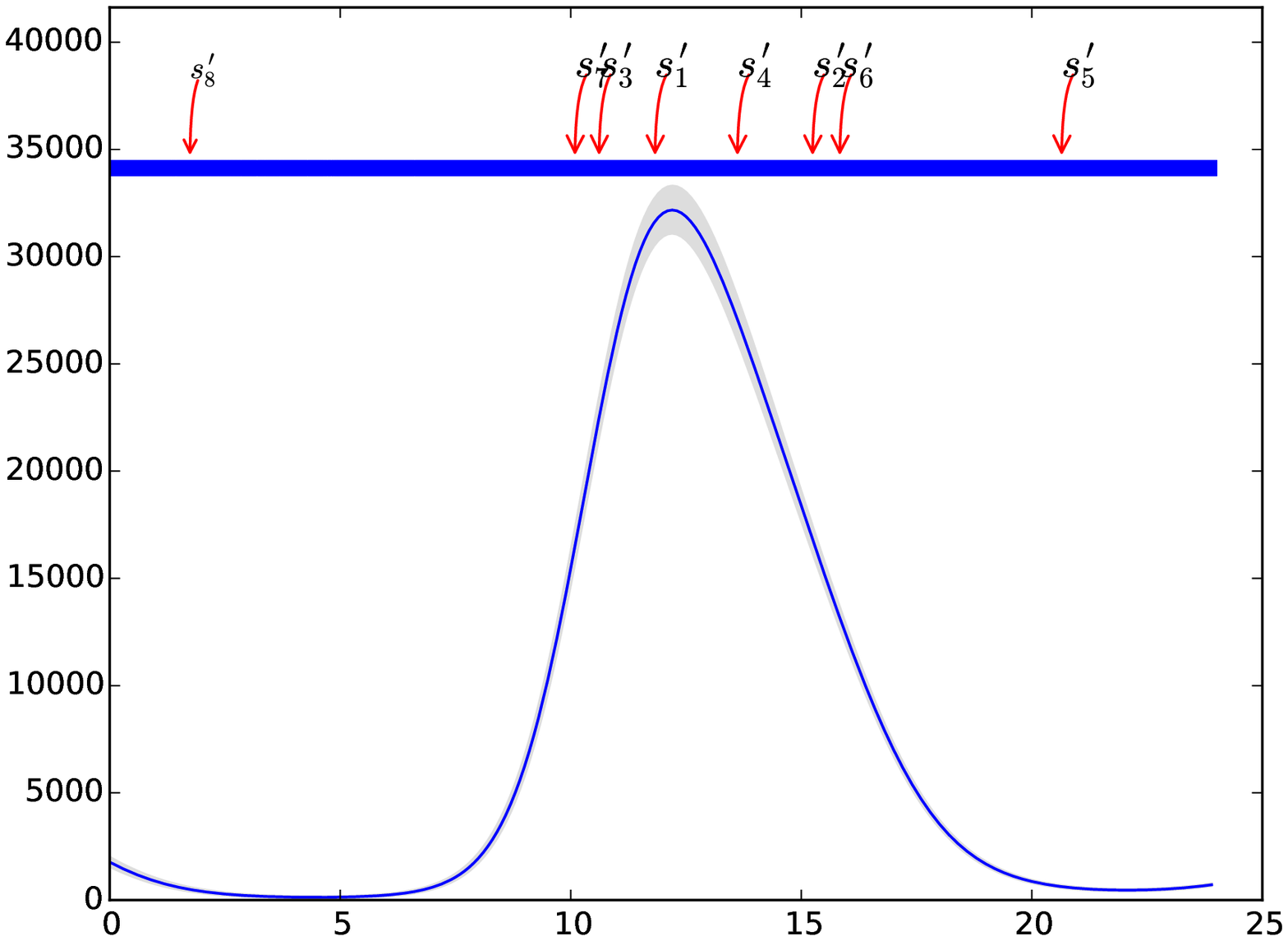}
(c) \includegraphics[width=0.5\textwidth]{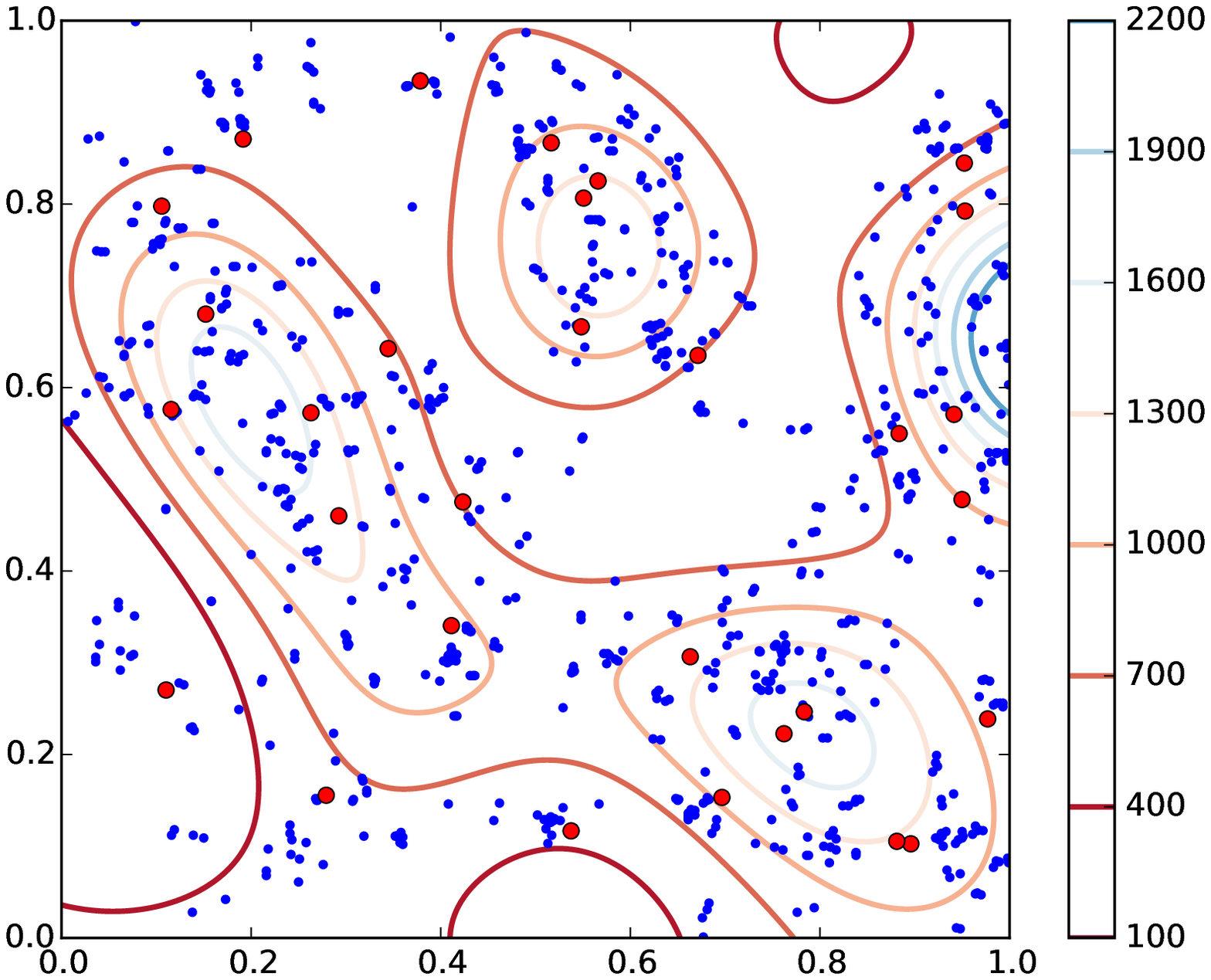}

\caption{Inference on the intensity functions of the coal mine data set (top), the twitter data set (middle), and the bramble canes data set (bottom). Blue dots are data points, red dots are inducing points (labelled in the upper panels in the order they were selected), the grey area is the 1 standard deviation confidence band.}
\label{fig:real_data_plot}
\end{figure}

\section{DISCUSSION}
\textbf{Scalability of the selection of inducing points}

The computational bottleneck of the selection of inducing points is in the evaluation of \[\text{Tr}( \Sigma_{\mathcal{D}\mathcal{D}^{\prime}}^{*}(\tilde{\theta}_i) \Sigma_{\mathcal{D^\prime}\mathcal{D^\prime}}^{*-1}(\tilde{\theta}_i) \Sigma_{\mathcal{D}\mathcal{D}^{\prime}}^{*T}(\tilde{\theta}_i)).\]Hence, the complexity and the memory requirement of the selection of inducing points are both linear in the number of data points $n:= \#\mathcal{D}$.

The number of inducing points generated by our algorithm does not increase with the size of the data, but rather as a function of the size of the domain and the resolution implied by the prior over the hyper-parameters. 

\textbf{Comparison with competing models}

We note that the computational bottleneck of our MCMC inference is in the evaluation of \[\text{Tr}(\Sigma_{\mathcal{D}\mathcal{D}}) = \text{Tr}(\Sigma^{*}_{\mathcal{D}\mathcal{D}}) - \text{Tr}(\Sigma^{*}_{\mathcal{D}\mathcal{D}^\prime}\Sigma^{*-1}_{\mathcal{D}^\prime\mathcal{D}^\prime}\Sigma^{*T}_{\mathcal{D}\mathcal{D}^\prime}).\]
Hence, inferring the intensity function under our model scales computationally in $\mathcal{O}(nk^2)$ and has a memory requirement  $\mathcal{O}(nk)$, where the number of inducing points $k$ is negligible. This is considerably better than alternative methods using Gaussian Processes \cite{Murray09, YWT11} whose complexities are cubic in the number of data points and whose memory requirement is squared in the number of data points. Moreover, the superior accuracy of our model compared to \cite{Murray09} and \cite{YWT11} is due to our use of the exponential transformation rather than the scaled sigmoid one. In effect, unlike the inverse scaled sigmoid function that tends to amplify variations, the logarithm tends to smooth out variations. Hence, when the true intensity is uneven, the log-intensity is more likely to resemble a draw from a stationary GP than the inverse scaled sigmoid of the true intensity function, and subsequently a stationary GP prior in the inverse domain is more suitable to the exponential transformation than to the scaled sigmoid transformation.

Our model is also more suitable than that of \cite{Cunni08} when confidence bounds are needed for the intensity function, or when the input space is of dimension higher than 1. The model is a useful alternative to that of \cite{Kottas06}, whose complexity is also linear. In effect, Gaussian Processes (GP) are more flexible than a Dirichlet Process (DP) mixture of Beta distributions. This is the result of the large number of known covariance kernels available in the literature and the state-of-the-art understanding of how well a given kernel can approximate an arbitrary function \cite{Micchelli06, Pillai07}. Moreover, unlike a Dirichlet Process mixture of Beta distributions, Gaussian Processes allow directly expressing practical prior features such as smoothness, amplitude, length scale(s) (memory), and periodicity. 

As our model relies on the Gauss-Legendre quadrature, we would not recommend it for applications with a large input space dimension. However, most interesting point process applications involve modelling temporal, spatial or spatio-temporal events, for which our model scales considerably better with the data size than competing approaches. In effect, the models proposed by \cite{Kottas06, Cunni08, Cunni08b, YWT11} are all specific to unidimensional input data, whereas the model introduced by \cite{Kottas07} is specific to spatial data. As for the model of \cite{Murray09}, it scales very poorly with the input space dimension for its complexity is cubic in the sum of the number of data points and the number of latent thinning points, and the number of thinning points grows exponentially with the input space dimension\footnote{The expected number of thinning points grows proportionally with the volume of the domain, which is exponential in the dimension of the input space when the domain is a hypercube with a given edge length.}.

\textbf{Extension of our model}

Although the covariance kernel $\gamma^{*}$ was assumed stationary, no result in this paper relied on that assumption. We solely needed to evaluate covariance matrices under $\gamma^{*}$. Hence, the proposed model and algorithm can also be used to account for known non-stationarities. More generally, the model presented in this paper can serve as foundation to make inference on the stochastic dependency between multiple point processes when the intensities are assumed to be driven by known exogenous factors, hidden common factor, and latent idiosyncratic factors.

\section{SUMMARY}
In this paper we propose a novel exact non-parametric model to make inference on Poisson Point Processes using Gaussian Processes. We derive a robust MCMC scheme to sample from the posterior intensity function. Our model outperforms competing benchmarks in terms of speed and accuracy as well as in the decorrelation of MCMC samples. A critical advantage of our approach is that it has a numerical complexity and a memory requirement \emph{linear} in the data size $n$ ($\mathcal{O}(nk^2)$, and $\mathcal{O}(nk)$ respectively, with $k \ll n$). Competing models using Gaussian Processes have a cubic numerical complexity and squared memory requirement. We show that our model readily handles data sizes not yet considered in the literature.

\section*{Acknowledgments}
Yves-Laurent Kom Samo is supported by the Oxford-Man Institute of Quantitative Finance.

\appendix
\section*{Appendix}
\renewcommand{\thesubsection}{\Alph{subsection}}

\subsection{There exists a Cox process with  an a.s. $\mathcal{C}^\infty$ intensity coinciding with any finite dimensional prior.}
In this section we prove the proposition below.
\begin{proposition}
Let $\mathbb{Q}$ be an $(n+1)$ dimensional continuous probability distribution whose density has support $\bigotimes_{i=1}^{n+1} ~]0, +\infty[$, and let $x_1, \dots, x_n$ be $n$ points on a compact domain $\mathcal{S} \subset \mathbb{R}^d$. There exists an almost surely non-negative and $\mathcal{C}^{\infty}$ stochastic process $\lambda$ on $\mathcal{S}$ such that
\[\big(\lambda(x_1),..., \lambda(x_n), \int_{\mathcal{S}} \lambda(x) dx\big) \sim \mathbb{Q}.\]
\end{proposition}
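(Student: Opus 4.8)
The plan is to avoid constructing a genuinely infinite-dimensional random field and instead realize $\mathbb{Q}$ as the pushforward of an explicit, finitely parametrised family of smooth non-negative functions. Concretely, I would first reduce the statement to a deterministic interpolation problem: it suffices to exhibit, for every target $y=(v_1,\dots,v_n,I)$ in the open positive orthant, a non-negative $\mathcal{C}^\infty$ function $\lambda_y$ on $\mathcal{S}$ with $\lambda_y(x_j)=v_j$ for each $j$ and $\int_{\mathcal{S}}\lambda_y\,dx=I$, constructed so that $(y,x)\mapsto\lambda_y(x)$ is jointly measurable. Granting this, one simply sets $\lambda:=\lambda_Y$ with $Y\sim\mathbb{Q}$; then $\big(\lambda(x_1),\dots,\lambda(x_n),\int_{\mathcal{S}}\lambda\big)=Y\sim\mathbb{Q}$ by construction, while every path is $\mathcal{C}^\infty$ and non-negative. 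I would note at the outset that the $x_j$ must be taken distinct, since otherwise $\mathbb{Q}$ could not have full-dimensional support.

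For the interpolation step I would use localized bumps together with a reservoir term. Fix a single $\mathcal{C}^\infty$ template $\phi\ge 0$ with $\phi(0)=1$ and support in the unit ball, and set $\psi_i^{(r)}(x)=\phi((x-x_i)/r)$, so that $\psi_i^{(r)}(x_i)=1$, while $\psi_i^{(r)}(x_j)=0$ for $j\ne i$ once $r$ is below the minimal inter-point distance, and $a_i(r):=\int_{\mathcal{S}}\psi_i^{(r)}\to 0$ as $r\to 0$. Fix also a reservoir $\psi_0\ge 0$, smooth, with $\psi_0(x_j)=0$ for all $j$ and $\int_{\mathcal{S}}\psi_0=1$ (possible since the finite point set leaves room in a set of positive measure). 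Given $y$, I would pick $r(y)$ small enough that $\sum_i v_i a_i(r(y))<I$ and then define
\[\lambda_y(x)=\sum_{i=1}^n v_i\,\psi_i^{(r(y))}(x)+\Big(I-\sum_{i=1}^n v_i\,a_i(r(y))\Big)\psi_0(x).\]
By construction $\lambda_y(x_j)=v_j$, $\int_{\mathcal{S}}\lambda_y=I$, and all coefficients are non-negative, so $\lambda_y\ge 0$ and $\lambda_y\in\mathcal{C}^\infty$.

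The one genuinely delicate point is reconciling non-negativity with an \emph{arbitrary} prescribed integral: the point-value bumps force a positive amount of mass $\sum_i v_i a_i(r)$ into the integral, which can exceed a small target $I$. The shrinking-bump device is exactly what resolves this, since $a_i(r)\to 0$ lets the bump contribution be made as small as desired for any fixed $y$, leaving a non-negative remainder $I-\sum_i v_i a_i(r)$ to be supplied by the reservoir. The rest is routine bookkeeping: choosing $r(y)$ as an explicit continuous function of $y$ (for instance $r(y)=\min\{r_0,(I/(2A\sum_i v_i+1))^{1/d}\}$ with $A=\int\phi$ and $r_0$ below the minimal inter-point distance), so that $(y,x)\mapsto\lambda_y(x)$ is jointly continuous and hence jointly measurable, and concluding that $\lambda_Y$ is a bona fide stochastic process with the claimed law and path regularity.
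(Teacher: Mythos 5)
Your proof is correct, but it takes a genuinely different route from the paper's. Both arguments share the same overall reduction --- draw $Y=(v_1,\dots,v_n,I)\sim\mathbb{Q}$ first, then deterministically build a non-negative $\mathcal{C}^\infty$ function matching the drawn point values and integral, so that the target vector equals $Y$ identically --- but the interpolation step differs. The paper uses a global squared-Lagrange polynomial interpolant $P(x)=\sum_{l} v_l \tfrac{1}{d}\sum_{j}\prod_{k\neq l}\big((x^j-x_k^j)/(x_l^j-x_k^j)\big)^2$ multiplied by an exponential warp $\exp\big(\alpha\sum_j\prod_l(x^j-x_l^j)^2\big)$ that equals $1$ at every $x_i$, and then tunes the single scalar $\alpha$ by the intermediate value theorem so that the integral over $\mathcal{S}$ equals $I$; you instead use compactly supported bumps plus a reservoir function, with the reservoir coefficient given in closed form, so no IVT is needed. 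Your route buys two things the paper glosses over. First, measurability: the paper never verifies that $\omega\mapsto\alpha^*(\omega)$ is a measurable selection (it is, via strict monotonicity of $\alpha\mapsto\int_{\mathcal{S}} f(\omega,x,\alpha)\,dx$, but that requires an argument), whereas your explicit continuous choice of $r(y)$ makes $(y,x)\mapsto\lambda_y(x)$ jointly continuous, so measurability of the process is immediate. Second, generality in $d>1$: the paper's coordinate-wise Lagrange denominators $(x_l^j-x_k^j)^2$ are undefined whenever two distinct points share a coordinate value, an implicit assumption strictly stronger than distinctness; your bumps need only pairwise distinct $x_j$, which, as you correctly observe, is forced anyway by the full-dimensional support of $\mathbb{Q}$. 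What the paper's construction buys in exchange is a real-analytic intensity rather than merely $\mathcal{C}^\infty$, which is irrelevant to the statement. Two minor points to tighten: take $r_0$ \emph{strictly} below the minimal inter-point distance so the scaled supports miss the other $x_j$ (with $\phi$ supported in the closed unit ball), and justify the reservoir's existence from $\mu(\mathcal{S})>0$ (e.g.\ normalize $x\mapsto\prod_{j}\|x-x_j\|^2$ over $\mathcal{S}$, or a bump placed at a density point of $\mathcal{S}$ avoiding the $x_j$) --- the same positivity of $\mu(\mathcal{S})$ is implicitly assumed by the paper as well.
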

\begin{proof}
Let \[(y_1, \dots, y_n, I)  \sim \mathbb{Q}\] and \[(y_1(\omega), \dots, y_n(\omega), I(\omega))\] a random draw. Let us denote $x^j, j \leq d$ the $j$-th coordinate of $x \in \mathbb{R}^d$. We consider the family of functions parametrized by $\alpha \in \mathbb{R}$:
\begin{align}
f(\omega, x, \alpha) =& \exp\bigg(\alpha \sum_{ j=1}^{d} \prod_{l=1}^{n} (x^j-x_l^j)^2\bigg) \\ \nonumber
& \times \sum_{l=1}^n y_l(\omega) \frac{1}{d}\sum_{j=1}^d \prod_{k \neq l} \bigg(\frac{x^j-x_k^j}{x^j_l-x_k^j}\bigg)^2.
\end{align}
We note that $\forall \alpha, x_i,~ f(\omega, x_i, \alpha) = y_i(\omega)$. Let us define the polynomial \[P(x)=\sum_{l=1}^n y_l(\omega) \frac{1}{d}\sum_{j=1}^d \prod_{k \neq l} \bigg(\frac{x^j-x_k^j}{x^j_l-x_k^j}\bigg)^2.\] As $P$ is continuous, it is bounded on the compact $\mathcal{S}$, and reaches its bounds. Thus we have 
\[ \exists ~m_p, M_p \geq 0, ~ \text{s.t.}~ \forall x \in \mathcal{S}, ~  0 \leq m_p \leq P(x) \leq M_p.\]
Similarly, if we define \[R(\alpha, x)=  \exp\bigg(\alpha \sum_{ j=1}^{d} \prod_{l=1}^{n} (x^j-x_l^j)^2\bigg) = R(1, x)^\alpha,\] it follows that
\[ \exists ~m_q, M_q > 1, ~ \text{s.t.}~ \forall x \in \mathcal{S}, ~ 1 <  m_q \leq R(1, x) \leq M_q.\]
Hence, 
\begin{equation}
\label{eq:int_bound}
 m_p m_q^\alpha \mu(\mathcal{S}) \leq  \int_{\mathcal{S}} f(\omega, x, \alpha) dx \leq M_p M_q^\alpha \mu(\mathcal{S}).
 \end{equation}
Moreover, we note that $\alpha \to \int_{\mathcal{S}} f(\omega, x, \alpha) dx$ is continuous on $\mathbb{R}$ as its restriction to any bounded interval is continuous (by dominated convergence theorem). Furthermore, given that $m_q, M_q >1$, it follows from Equation (\ref{eq:int_bound}) that \[\underset{ \alpha \to +\infty}{\text{lim }} \int_{\mathcal{S}} f(\omega, x, \alpha) dx = +\infty\] and \[\underset{ \alpha \to -\infty}{\text{lim }} \int_{\mathcal{S}} f(\omega, x, \alpha) dx = 0.\]
Hence, by intermediate value theorem, 
\[\forall ~ I(\omega) > 0, ~ \exists \alpha^* (\omega) ~\text{s.t.}~ I(\omega) = \int_{\mathcal{S}} f(\omega, x, \alpha^*(\omega)) dx.\]
Finally, let us define the stochastic process $\lambda$ on $\mathcal{S}$ as \[\omega \to \lambda(\omega, x) := f(\omega, x, \alpha^*(\omega)).\] 
To summarise,
\[\forall ~ x_i, \lambda(\omega, x_i) :=  f(\omega, x_i, \alpha^*(\omega)) = y_i(\omega),\]
\[I(\omega) = \int_{\mathcal{S}} \lambda(\omega, x) dx,\] 
and 
\[(y_1, \dots, y_n, I) \sim \mathbb{Q}:\]
this implies $\big(\lambda(x_1),..., \lambda(x_n), \int_{\mathcal{S}} \lambda(x) dx\big) \sim \mathbb{Q}.$
Finally,
\[\forall ~ x \in \mathcal{S}, ~ \lambda(\omega, x) \geq 0, \text{ and } \forall ~ \omega, ~ x \to \lambda(\omega, x) \text{ is } \mathcal{C}^{\infty},\]
which concludes our proof.
\end{proof}

\subsection{Proof of convergence of Algorithm 1}
The idea behind the proof is to show that the sequence of maximum utility \[u_k = \underset{s \in \mathcal{S}}{\text{max }} \tilde{\mathcal{U}}(\{s^\prime_1, ..., s^\prime_{k-1}\} \cup \{s\})\]
is positive, increasing and upper-bounded and thus converges to a strictly positive limit. This would then imply that \[ \frac{u_{k+1}-u_{k}}{u_k}  \underset{k \to \infty}{\longrightarrow} 0\] and subsequently that \[ \forall ~ 0 < \alpha < 1,  \exists ~ k_{\text{lim}} \in \mathbb{N} \text{ s.t. } \forall ~ k > k_{\text{lim}}, \frac{u_{k+1}-u_{k}}{u_k} < \alpha \] or in other words Algorithm 1 always stops in finite time.

To show that $\forall k>0, ~ u_k>0$, we note that $\Sigma_{\mathcal{D^\prime}\mathcal{D^\prime}}^{*}(\tilde{\theta}_i)$ is a covariance matrix and as such it is positive definite. It follows that $\Sigma_{\mathcal{D^\prime}\mathcal{D^\prime}}^{*-1}(\tilde{\theta}_i)$ is also positive definite. We further note that the j-th diagonal term of $\Sigma_{\mathcal{D}\mathcal{D}^{\prime}}^{*}(\tilde{\theta}_i) \Sigma_{\mathcal{D^\prime}\mathcal{D^\prime}}^{*-1}(\tilde{\theta}_i) \Sigma_{\mathcal{D}\mathcal{D}^{\prime}}^{*T}(\tilde{\theta}_i)$ can be written as $x_j^T\Sigma_{\mathcal{D^\prime}\mathcal{D^\prime}}^{*-1}(\tilde{\theta}_i) x_j$ where $x_j$ is the j-th column of $\Sigma_{\mathcal{D}\mathcal{D}^{\prime}}^{*T}(\tilde{\theta}_i)$. Hence, by virtue of the positive definitiveness of $\Sigma_{\mathcal{D^\prime}\mathcal{D^\prime}}^{*-1}(\tilde{\theta}_i)$, the diagonal terms of 
$\Sigma_{\mathcal{D}\mathcal{D}^{\prime}}^{*}(\tilde{\theta}_i) \Sigma_{\mathcal{D^\prime}\mathcal{D^\prime}}^{*-1}(\tilde{\theta}_i) \Sigma_{\mathcal{D}\mathcal{D}^{\prime}}^{*T}(\tilde{\theta}_i)$ are all positive, which proves that the utility function $\tilde{\mathcal{U}}$ is positive, and subsequently that $\forall k>0, u_k >0$.

To show that $(u_k)_{k \in \mathbb{N}^{*}}$ is upper-bounded, we note that the matrix \[C_{i\mathcal{D}^{\prime}} = \Sigma_{\mathcal{D}\mathcal{D}}^{*}(\tilde{\theta}_i) - \Sigma_{\mathcal{D}\mathcal{D}^{\prime}}^{*}(\tilde{\theta}_i) \Sigma_{\mathcal{D^\prime}\mathcal{D^\prime}}^{*-1}(\tilde{\theta}_i) \Sigma_{\mathcal{D}\mathcal{D}^{\prime}}^{*T}(\tilde{\theta}_i)\] where the notation is as per the rest of the paper, is an auto-covariance matrix, and as such has positive diagonal elements. Hence, \[\text{Tr}(\Sigma_{\mathcal{D}\mathcal{D}}^{*}(\tilde{\theta}_i)) \geq \text{Tr}(\Sigma_{\mathcal{D}\mathcal{D}^{\prime}}^{*}(\tilde{\theta}_i) \Sigma_{\mathcal{D^\prime}\mathcal{D^\prime}}^{*-1}(\tilde{\theta}_i) \Sigma_{\mathcal{D}\mathcal{D}^{\prime}}^{*T}(\tilde{\theta}_i))\] and finally \[\forall ~ k \in \mathbb{N}^{*}, u_k \leq \frac{1}{N} \sum_{i=1}^{N}  \text{Tr}(\Sigma_{\mathcal{D}\mathcal{D}}^{*}(\tilde{\theta}_i)).\]

Moreover, we note that showing that $(u_k)_{k \in \mathbb{N}^{*}}$ is increasing is equivalent to showing that  $(v_k)_{k \in \mathbb{N}^{*}}$ with \[v_k = \underset{s \in \mathcal{S}}{\text{min }}\frac{1}{N} \sum_{i=1}^{N}  \text{Tr}(C_{i\{s^\prime_1, ..., s^\prime_{k-1}\} \cup \{s\}})\] is decreasing. We recall that $C_{i\{s^\prime_1, ..., s^\prime_{k-1}\} \cup \{s\}}$ is the covariance matrix of the values of the stationary Gaussian Process of our model at the data points, conditioned on its values at $\{s^\prime_1, ..., s^\prime_{k-1}\} \cup \{s\}$. 

It follows from the law of iterated expectations that $C_{i\{s^\prime_1, ..., s^\prime_{k-1}\} \cup \{s\}}$ could also be seen as the covariance matrix of the values of a conditional Gaussian Process at the data points, \footnote{The conditional GP is defined as the stationary Gaussian Process in our model is conditioned on its values at the points $\{s^\prime_1, ..., s^\prime_{k-1}\}$} conditioned on its value at $s$. Hence,

\begin{align}
&C_{i\{s^\prime_1, ..., s^\prime_{k-1}\} \cup \{s\}} = \nonumber \\
&C_{i\{s^\prime_1, ..., s^\prime_{k-1}\}} - \frac{1}{\hat{\Sigma}_{ss}(\tilde{\theta}_i)}\hat{\Sigma}_{\mathcal{D}\{s\}}(\tilde{\theta}_i)\hat{\Sigma}_{\mathcal{D}\{s\}}^T(\tilde{\theta}_i) \nonumber
\end{align}

where $\hat{\Sigma}_{XY}$ denotes the covariance matrix between the values of the conditional GP at points in X and at points in Y. In particular, $\hat{\Sigma}_{ss}(\tilde{\theta}_i)$ is a positive scalar. What's more the diagonal elements of $\hat{\Sigma}_{\mathcal{D}\{s\}}(\tilde{\theta}_i)\hat{\Sigma}_{\mathcal{D}\{s\}}^T(\tilde{\theta}_i)$ are all non-negative. Hence, \[\forall s \in \mathcal{S}, \text{Tr}(C_{i\{s^\prime_1, ..., s^\prime_{k-1}\} \cup \{s\}}) \leq \text{Tr}(C_{i\{s^\prime_1, ..., s^\prime_{k-1}\}}) \] and averaging over the set of hyper-parameters $\theta_i$ and taking the min we get \[\forall ~ k \geq 2, v_k \leq v_{k-1}\] which concludes the proof.

\subsection{Proof of the rate of convergence of Algorithm 1 and that $u_{f}$ in Algorithm 1 converges to $\frac{1}{N} \sum_{i=1}^{N}  \text{Tr}(\Sigma_{\mathcal{D}\mathcal{D}}^{*}(\tilde{\theta}_i))$ as $\alpha$ goes to $0$}

The key idea of this proof is to note as previously shown that no set of inducing points has a utility greater than $w_{\infty} :=\frac{1}{N} \sum_{i=1}^{N}  \text{Tr}(\Sigma_{\mathcal{D}\mathcal{D}}^{*}(\tilde{\theta}_i))$, but that any set of inducing points that includes $\mathcal{D}$ has a utility equal to $ w_{\infty}$.

Let $\{s_1^\prime, ..., s_k^\prime\}$ be points selected after $k$ iterations of Algorithm 1, and let us denote by $\{u_1, ..., u_k\}$ the maximum utilities after the corresponding iterations as usual. Let us denote by \[\tilde{s}_k = \underset{s \in \mathcal{D}}{\text{argmax }} \tilde{\mathcal{U}}(\{s^\prime_1, ..., s^\prime_{k-1}\} \cup \{s\}) \] the best candidate \textit{in the data set} to be the k-th inducing point after $k-1$ iterations of our algorithm. As previously mentioned, $\{s^\prime_1, ..., s^\prime_{k-1}\} \cup \mathcal{D}$ is a set of inducing points with perfect utility. Therefore, if we select the data points as inducing points after $\{s^\prime_1, ..., s^\prime_{k-1}\}$, their contribution to the overall utility will be $w_{\infty}-u_{k-1}$. If we further constrain our choice of $\mathcal{D}$ as additional inducing points to start with $\tilde{s}_k$ then the incremental utility of choosing $\tilde{s}_k$ will be at least $\frac{w_{\infty}-u_{k-1}}{n}$, where $n$ is the data size as usual. This is because $\tilde{s}_k$ is the best choice for the k-th inducing point in $\mathcal{D}$ after having picked $\{s^\prime_1, ..., s^\prime_{k-1}\}$ and because the incremental utility of choosing an inducing point is higher earlier (when little is known about the GP) than later (when more is known about the GP). What's more, by definition, the incremental utility of choosing $s_k^\prime$ after $\{s^\prime_1, ..., s^\prime_{k-1}\}$ is higher than that of choosing $\tilde{s}_k$ after $\{s^\prime_1, ..., s^\prime_{k-1}\}$. Hence, \[ u_k - u_{k-1} \geq \frac{w_{\infty}-u_{k-1}}{n}.\]
Let us denote by $w_k$ the sequence satisfying \[w_0 = u_0, \forall ~ k \in \mathbb{N}^{*} w_k - w_{k-1} = \frac{w_{\infty}-w_{k-1}}{n}.\] It can be shown (by induction on k) that \[\forall ~ k \in \mathbb{N}^{*}  w_k \leq u_k.\]

Moreover, we note that \[w_k - w_{\infty} = (1-\frac{1}{n})(w_{k-1} - w_{\infty}).\] Hence \[w_k = w_{\infty} + (1-\frac{1}{n})^k (w_0 -w_{\infty}),\] which proves that the sequence $w_{k}$ converges linearly to $w_{\infty}$ with rate $1-\frac{1}{n}$.

On one hand, we have shown that the sequence $u_k$ converges and is upper-bounded by $w_{\infty}$, hence its limit is smaller than $w_{\infty}$: \[u_{\infty} := \underset{k \to \infty}{\text{lim }} u_k \leq w_{\infty}.\]

On the other hand, we have shown that $\forall ~ k \in \mathbb{N}^{*} ~ w_k \leq u_k$ which implies \[w_{\infty} \leq u_{\infty}.\] Hence, \[w_{\infty} = u_{\infty} = \frac{1}{N} \sum_{i=1}^{N}  \text{Tr}(\Sigma_{\mathcal{D}\mathcal{D}}^{*}(\tilde{\theta}_i)).\]

As $w_k$ is upper-bounded by $u_k$ and both sequences converge to the same limit, $u_k$, and subsequently Algorithm 1, converge at least as fast as $w_k$.  

In regards to the second statement of our proposition, we have that \[ \underset{\alpha \to 0}{\text{lim }} u_{f}(\alpha) = \underset{k \to \infty}{\text{lim }} u_k = \frac{1}{N} \sum_{i=1}^{N}  \text{Tr}(\Sigma_{\mathcal{D}\mathcal{D}}^{*}(\tilde{\theta}_i)).\]

\newpage
\bibliography{icml_2015_arxiv}
\bibliographystyle{aistats2014}
\end{document}